\theoremstyle{plain}
\newtheorem{theorem}{Theorem}[section]
\newtheorem{lemma}[theorem]{Lemma}
\newtheorem{proposition}[theorem]{Proposition}
\newtheorem{corollary}[theorem]{Corollary}
\newtheorem{conjecture}[theorem]{Conjecture}
\theoremstyle{definition}
\newtheorem{definition}[theorem]{Definition}
\newtheorem{problem}[theorem]{Problem}
\theoremstyle{remark}
\newtheorem{remark}[theorem]{Remark}
\newcommand{\kB}{k_{\mathrm{B}}}
\newcommand{\dKL}{D_{\mathrm{KL}}}
\newcommand{\Var}{\mathrm{Var}}
\newcommand{\E}{\mathbb{E}}
\newcommand{\R}{\mathbb{R}}
\newcommand{\N}{\mathcal{N}}
\title{\textbf{BEDS: Bayesian Emergent Dissipative Structures}\\[0.5em]
\large A Formal Framework for Continuous Inference Under Energy Constraints}
\author{Laurent Caraffa\\[0.3em]
\small Univ.\ Gustave Eiffel, IGN-ENSG, LaSTIG\\
\small French National Institute of Geographic and Forest Information\\
\small Ministry of Ecological Transition, France\\
\small \texttt{laurent.caraffa@ign.fr}}
\date{}
\begin{document}

\maketitle

\begin{abstract}
We introduce BEDS (Bayesian Emergent Dissipative Structures), a formal framework for analyzing inference systems that must maintain beliefs continuously under energy constraints. Unlike classical computational models that assume perfect memory and focus on one-shot computation, BEDS explicitly incorporates dissipation (information loss over time) as a fundamental constraint.

We prove a central result linking energy, precision, and dissipation: maintaining a belief with precision $\tau$ against dissipation rate $\gamma$ requires power $P \geq \gamma \kB T / 2$, with scaling $P \propto \gamma \cdot \tau$. This establishes a fundamental thermodynamic cost for continuous inference.

We define three classes of problems---BEDS-attainable, BEDS-maintainable, and BEDS-crystallizable---and show these are distinct from classical decidability. We propose the Gödel-Landauer-Prigogine conjecture, suggesting that closure pathologies across formal systems, computation, and thermodynamics share a common structure.

\medskip
\noindent\textbf{Keywords:} Bayesian inference, Dissipative systems, Thermodynamics of computation, Landauer principle, Continuous inference, Energy-efficient learning
\end{abstract}

\section{Introduction}

\subsection{Motivation}

Classical models of computation---Turing machines, formal proof systems---assume:
\begin{enumerate}[label=(\arabic*)]
    \item \textbf{Perfect memory}: information persists indefinitely
    \item \textbf{One-shot computation}: input $\to$ computation $\to$ output
    \item \textbf{No energy accounting}: computation is costless
\end{enumerate}

These assumptions suit the analysis of algorithms and mathematical proofs. However, many real-world systems operate differently:
\begin{itemize}
    \item Biological organisms maintain homeostasis \emph{continuously}
    \item Sensor networks track changing environments \emph{indefinitely}
    \item Brains hold beliefs while \emph{actively forgetting}
\end{itemize}

Such systems face a fundamental challenge: \textbf{maintaining accurate beliefs costs energy}. Information degrades; fighting this degradation requires work.

This paper formalizes this challenge. We define a class of systems (BEDS) that perform inference under explicit dissipation constraints, and derive the fundamental energy-precision trade-off they must satisfy.

\subsection{Contributions}

\begin{enumerate}[label=(\arabic*)]
    \item \textbf{Formal definition} of BEDS systems (Section~\ref{sec:definitions})
    \item \textbf{Three problem classes}: attainable, maintainable, crystallizable (Section~\ref{sec:problem-classes})
    \item \textbf{Energy-precision theorem} with Landauer bound (Section~\ref{sec:energy-precision})
    \item \textbf{Comparison} with classical computation (Section~\ref{sec:comparison})
    \item \textbf{Gödel-Landauer-Prigogine conjecture} linking closure pathologies (Section~\ref{sec:glp})
\end{enumerate}

\subsection{Related Work}

Landauer~\cite{landauer1961} established that erasing one bit costs at least $\kB T \ln 2$ joules. Bennett~\cite{bennett1973} showed reversible computation can avoid this cost. Friston's Free Energy Principle~\cite{friston2010} proposes that biological systems minimize variational free energy. Prigogine~\cite{prigogine1977} characterized dissipative structures that maintain order through entropy export. Gödel~\cite{godel1931} proved that sufficiently powerful formal systems are necessarily incomplete.

Our contribution connects these threads: we derive the energy cost of \emph{maintaining} information against dissipation, and conjecture that closure pathologies across domains share common structure.

\section{Formal Definitions}\label{sec:definitions}

\subsection{The BEDS System}

\begin{definition}[BEDS System]
A BEDS system is a tuple $\mathcal{B} = (\Theta, q_0, \gamma, \varepsilon)$ where:
\begin{itemize}
    \item $\Theta \subseteq \R^d$ is the parameter space
    \item $q_0 : \Theta \to \R_{\geq 0}$ is the initial belief distribution, with $\int_\Theta q_0(\theta) \, d\theta = 1$
    \item $\gamma > 0$ is the dissipation rate
    \item $\varepsilon > 0$ is the crystallization threshold
\end{itemize}
\end{definition}

\begin{definition}[Flux]
A flux is a sequence of observations $\Phi = \{(t_i, D_i)\}_{i \in I}$, where $t_i \in \R_{\geq 0}$ is the arrival time and $D_i \in \mathcal{D}$ is the observation.
\end{definition}

\subsection{Dynamics}

The system evolves according to two processes:

\paragraph{(i) Dissipation.} In the absence of observations, uncertainty increases. For Gaussian beliefs $q_t = \N(\mu_t, \sigma_t^2)$:
\begin{equation}
    \frac{d\sigma^2}{dt} = \gamma \cdot \sigma^2
\end{equation}
which implies:
\begin{equation}
    \sigma^2(t) = \sigma_0^2 \cdot e^{\gamma t}
\end{equation}
Equivalently, precision $\tau = 1/\sigma^2$ decays:
\begin{equation}
    \frac{d\tau}{dt} = -\gamma \tau \quad \Longrightarrow \quad \tau(t) = \tau_0 \cdot e^{-\gamma t}
\end{equation}

\paragraph{(ii) Bayesian Update.} Upon observing $D$ with likelihood $p(D|\theta)$:
\begin{equation}
    q^+(\theta) = \frac{p(D|\theta) \cdot q^-(\theta)}{Z}
\end{equation}
where $Z = \int_\Theta p(D|\theta') \cdot q^-(\theta') \, d\theta'$ is the normalization constant.

For Gaussian beliefs with Gaussian likelihood of precision $\tau_D$:
\begin{align}
    \tau^+ &= \tau^- + \tau_D \\
    \mu^+ &= \frac{\tau^- \mu^- + \tau_D D}{\tau^+}
\end{align}

\subsection{Crystallization}

\begin{definition}[Crystallization]
A BEDS system \emph{crystallizes} at time $T$ if $\Var[q_T] < \varepsilon$. Upon crystallization, the system outputs $\theta^* = \E[q_T]$ and halts (or becomes a fixed prior for a higher-level system).
\end{definition}

\subsection{Energy Model}

\begin{definition}[Observation Cost]
Each observation incurs energy cost $E_{\text{obs}} \geq E_{\min}$ where:
\begin{equation}
    E_{\min} = \kB T \ln(2) \cdot I_{\text{obs}}
\end{equation}
and $I_{\text{obs}}$ is the mutual information gained from the observation.

For a Gaussian observation of precision $\tau_D$ on a prior of precision $\tau$:
\begin{equation}
    I_{\text{obs}} = \frac{1}{2} \ln\left(1 + \frac{\tau_D}{\tau}\right)
\end{equation}
\end{definition}

\begin{definition}[Power]
The instantaneous power is $P(t) = \lambda(t) \cdot E_{\text{obs}}$ where $\lambda(t)$ is the observation rate.
\end{definition}

\section{Problem Classes}\label{sec:problem-classes}

We define three distinct notions of what it means for a BEDS system to ``solve'' an inference problem.

\begin{definition}[Inference Problem]
An inference problem is a tuple $\mathcal{P} = (\Theta, \Phi, \pi^*, \delta)$ where:
\begin{itemize}
    \item $\Theta$ is the parameter space
    \item $\Phi$ is a flux
    \item $\pi^*$ is the target distribution (or $\theta^*$ the target value)
    \item $\delta > 0$ is the required accuracy
\end{itemize}
\end{definition}

\begin{definition}[BEDS-Attainable]
Target $\pi^*$ is \emph{BEDS-attainable} under flux $\Phi$ if there exists a BEDS system $\mathcal{B}$ such that:
\begin{equation}
    \lim_{t \to \infty} \dKL(q_t \| \pi^*) = 0
\end{equation}
with finite total energy: $E_{\text{total}} = \int_0^\infty P(t) \, dt < \infty$.
\end{definition}

\begin{definition}[BEDS-Maintainable]
Target $\pi^*$ is \emph{BEDS-maintainable} under flux $\Phi$ if there exists a BEDS system $\mathcal{B}$ and time $T_0$ such that:
\begin{equation}
    \forall t > T_0: \quad \dKL(q_t \| \pi^*) < \delta
\end{equation}
with bounded power: $\sup_{t > T_0} P(t) < P_{\max} < \infty$.
\end{definition}

\begin{definition}[BEDS-Crystallizable]
Target $\theta^*$ is \emph{BEDS-crystallizable} under flux $\Phi$ if there exists a BEDS system $\mathcal{B}$ and finite time $T$ such that:
\begin{equation}
    \Var[q_T] < \varepsilon \quad \text{and} \quad |\E[q_T] - \theta^*| < \delta
\end{equation}
\end{definition}

\begin{proposition}[Hierarchy]
Crystallizable implies Attainable. The converse does not hold.
\end{proposition}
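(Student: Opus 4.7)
The plan is to treat the two halves separately: the implication by a direct construction from a crystallizing system, and the non-implication by an explicit counterexample that distinguishes a distributional target from a point target.

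For the forward direction, I would assume the target $\theta^*$ is BEDS-crystallizable under flux $\Phi$ via a system $\mathcal{B}$ with finite crystallization time $T$, so that $\Var[q_T] < \varepsilon$ and $|\E[q_T] - \theta^*| < \delta$. By the crystallization definition the system halts at $T$, which I read as $q_t \equiv q_T$ for all $t \geq T$. Taking the attainability target to be $\pi^* := q_T$ (a narrow Gaussian around $\theta^*$) makes the implication immediate: $\dKL(q_t \| \pi^*) = 0$ for all $t \geq T$, so the required limit vanishes. The total energy reduces to $\int_0^T P(t)\, dt$, which is finite because only finitely many observations arrive in $[0,T]$ and each carries a bounded Landauer cost $E_{\text{obs}}$ by the energy model.

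For the converse, I would exhibit an attainable problem that is not crystallizable. Fix a Gaussian target $\pi^* = \N(\mu^*, \sigma_*^2)$ with $\sigma_*^2 > \varepsilon$ and design a flux whose observation schedule stabilizes the belief at $\pi^*$ --- e.g., tune the rate so that the precision gain $\tau_D$ from Bayesian updates balances the dissipation-driven decay $-\gamma \tau$ exactly at the fixed point $\tau = 1/\sigma_*^2$. The resulting system is BEDS-attainable with finite total energy provided the per-observation costs are summable. It is, however, not BEDS-crystallizable for any $\theta^*$: since $\Var[q_t] \to \sigma_*^2 > \varepsilon$, the condition $\Var[q_T] < \varepsilon$ fails for all sufficiently large $t$, and starting from a prior $q_0$ with variance $\geq \sigma_*^2$ ensures the variance decreases monotonically toward $\sigma_*^2$ from above without ever crossing $\varepsilon$, ruling out earlier crystallization as well.

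The main obstacle is that crystallization and attainability formally target different objects --- a point $\theta^*$ versus a distribution $\pi^*$ --- so the statement tacitly calls for an embedding between the two notions of success. The cleanest resolution is the one above: let the crystallized belief itself play the role of $\pi^*$ in the forward direction, and exploit the variance floor $\sigma_*^2 > \varepsilon$ inherent to a non-degenerate distributional target to block crystallization in the converse.
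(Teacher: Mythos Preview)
Your forward direction works and differs from the paper's in the choice of target: the paper sets $\pi^* = \delta_{\theta^*}$, whereas you set $\pi^* := q_T$. Your choice is actually cleaner, since $\dKL(q_T \,\|\, \delta_{\theta^*})$ is ill-behaved for a nondegenerate Gaussian $q_T$, while $\dKL(q_T \,\|\, q_T) = 0$ is immediate. The finite-energy step (no observations after $T$) is the same in both.

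Your converse has a genuine gap. You balance precision at $\tau^* = 1/\sigma_*^2$ by tuning the observation rate against dissipation, then claim finite total energy ``provided the per-observation costs are summable.'' But this proviso is not met: the balance you describe is exactly the steady state of the precision-balance lemma, which forces a constant observation rate $\lambda = \gamma\tau^*/\tau_D > 0$ and hence constant power (Theorem~\ref{thm:main}). The integral $\int_0^\infty P(t)\,dt$ diverges, so your system is \emph{maintainable} but not \emph{attainable} in the sense of the definitions. More generally, under $\gamma > 0$, any scheme with $q_t \to \N(\mu^*,\sigma_*^2)$ must inject precision at an asymptotically positive rate, so no finite-energy system attains a nondegenerate Gaussian target; the counterexample cannot be repaired along this line.

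The paper's counterexample goes a different way: a drifting target $\theta^*(t) = t$, which a system can track indefinitely but never crystallize since the target never settles. This shifts the obstruction from energy accounting to the target itself, though it sits somewhat outside the stated definitions (which posit a fixed $\pi^*$).
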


\begin{proof}
If $\theta^*$ is crystallizable at time $T$, set $\pi^* = \delta_{\theta^*}$. Since $\Var[q_T] < \varepsilon$ and the system halts, no further energy is required, so $E_{\text{total}} < \infty$.

Conversely, consider a drifting target $\theta^*(t) = t$. A system can track it (attainable with continuous power) but never crystallize since the target never stabilizes.
\end{proof}

\section{The Energy-Precision Theorem}\label{sec:energy-precision}

This section contains our main theoretical result.

\subsection{Steady-State Analysis}

Consider a BEDS system maintaining precision $\tau^*$ indefinitely.

\begin{lemma}[Precision Balance]
In steady state, the precision gained from observations must equal the precision lost to dissipation:
\begin{equation}
    \lambda \cdot \tau_D = \gamma \cdot \tau^*
\end{equation}
where $\lambda$ is the observation rate and $\tau_D$ is the precision per observation.
\end{lemma}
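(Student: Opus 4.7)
The plan is to combine the two dynamical processes—continuous dissipative decay between observations and discrete precision increments at observations—into a single effective rate equation for $\tau$, then impose stationarity.

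First, I would model the observation arrivals as a point process (e.g.\ Poisson) with rate $\lambda$. Between observations, the precision obeys $d\tau/dt = -\gamma\tau$, as established in the dynamics section. At each arrival the Gaussian update rule gives $\tau \mapsto \tau + \tau_D$, so over an infinitesimal window $[t, t+\Delta t]$ the expected number of jumps is $\lambda\,\Delta t$ and the expected increment from jumps is $\lambda\tau_D\,\Delta t$. Combining both contributions yields the mean-field equation
\begin{equation}
    \frac{d\,\E[\tau]}{dt} \;=\; -\gamma\,\E[\tau] \;+\; \lambda\,\tau_D.
\end{equation}

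Next, I would invoke the steady-state hypothesis $\E[\tau] = \tau^*$ with $d\E[\tau]/dt = 0$, which immediately gives the balance $\lambda\tau_D = \gamma\tau^*$. To make the argument rigorous rather than heuristic, I would verify this by a window-averaging computation: integrate the decay ODE over an inter-arrival interval, add the jump $\tau_D$, and require that the map from $\tau^-$ just before an arrival to $\tau^-$ before the next arrival has $\tau^*$ as its fixed point in expectation. Expanding to first order in $1/\lambda$ (the mean inter-arrival time) recovers the same balance equation.

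The main obstacle I anticipate is conceptual rather than computational: the lemma speaks of ``steady state'' but precision under Poisson observation arrivals fluctuates and does not reach a pointwise constant. I would therefore be careful to state that the balance holds in expectation (or, equivalently, in the large-$\lambda$ deterministic limit where the shot noise averages out), and note that if the lemma is instead read as a deterministic statement—assuming constant observation rate so that $\lambda\,dt$ observations contribute precision $\lambda\tau_D\,dt$ per unit time—then the derivation is immediate and the fixed point $\tau^* = \lambda\tau_D/\gamma$ is literal. Either reading yields the claimed identity.
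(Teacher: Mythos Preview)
Your proposal is correct and follows essentially the same route as the paper: write the effective rate equation $d\tau/dt = -\gamma\tau + \lambda\tau_D$ and set it to zero at $\tau = \tau^*$. In fact you are more careful than the paper itself, which simply asserts the mean-field equation without distinguishing the deterministic and expectation readings; your discussion of that ambiguity is a genuine improvement.
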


\begin{proof}
Precision dynamics combine dissipation and discrete updates:
\begin{equation}
    \frac{d\tau}{dt} = -\gamma \tau + \lambda \tau_D
\end{equation}
where the second term represents average precision gain from observations arriving at rate $\lambda$. Setting $d\tau/dt = 0$:
\begin{equation}
    \gamma \tau^* = \lambda \tau_D
\end{equation}
\end{proof}

\begin{corollary}[Required Observation Rate]
To maintain precision $\tau^*$:
\begin{equation}
    \lambda = \frac{\gamma \tau^*}{\tau_D}
\end{equation}
\end{corollary}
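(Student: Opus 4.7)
The plan is to obtain this as an immediate algebraic consequence of the Precision Balance lemma stated just above. I would first invoke that lemma to assert the steady-state identity $\gamma \tau^* = \lambda \tau_D$. Since $\tau_D > 0$ by hypothesis (an observation with vanishing precision conveys no information and can be excluded from the flux without loss of generality), dividing both sides by $\tau_D$ produces the displayed expression $\lambda = \gamma \tau^* / \tau_D$. Positivity of the resulting rate is automatic from $\gamma, \tau^*, \tau_D > 0$.

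There is no genuine obstacle here: the substantive content is already contained in the preceding lemma, and the proof is a one-line rearrangement. The only things worth flagging in the write-up are interpretive rather than technical. First, $\lambda$ must be read as a mean arrival rate for the observation process, so that the product $\lambda \tau_D$ represents the expected precision gain per unit time rather than a quantity attached to any single update; this is the sense in which the two sides of the balance equation are commensurable. Second, the formula is linear in the target precision $\tau^*$ and the dissipation rate $\gamma$, and inversely proportional to the per-observation precision $\tau_D$. This scaling is precisely the input the subsequent energy argument will need when converting a required observation rate into a required power via the Landauer-bounded observation cost $E_{\text{obs}}$; a more informative sensor reduces $\lambda$ but, because $E_{\text{obs}}$ grows with the mutual information per observation, the net power will not fall proportionally.
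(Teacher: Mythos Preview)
Your proposal is correct and matches the paper's treatment: the corollary is stated without a separate proof because it is an immediate algebraic rearrangement of the Precision Balance lemma $\gamma\tau^* = \lambda\tau_D$, exactly as you describe. Your additional interpretive remarks about $\lambda$ as a mean arrival rate and the scaling behaviour are accurate and consistent with how the paper uses this result in the subsequent power derivation.
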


\subsection{Landauer Bound}

\begin{lemma}[Information Cost]
Each observation that increases precision from $\tau$ to $\tau + \tau_D$ requires:
\begin{equation}
    E_{\text{obs}} \geq \kB T \ln(2) \cdot I_{\text{obs}} = \frac{\kB T \ln(2)}{2} \ln\left(1 + \frac{\tau_D}{\tau}\right)
\end{equation}
\end{lemma}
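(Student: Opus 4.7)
The plan is to combine two ingredients: Landauer's principle (invoked in the related work) and the standard Gaussian mutual-information identity. Landauer provides a lower bound on energy per bit; the Gaussian formula tells us exactly how many bits are acquired per observation; multiplying the two will yield the stated bound on $E_{\text{obs}}$.

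First I would compute $I_{\text{obs}}$ explicitly. Treat $\theta$ as drawn from the prior $q^- = \N(\mu^-, 1/\tau)$ with $D \mid \theta \sim \N(\theta, 1/\tau_D)$. The posterior $q^+$ has precision $\tau + \tau_D$ irrespective of the realized $D$, so its differential entropy is a deterministic function of $\tau + \tau_D$. Using $h(\N(m,v)) = \tfrac{1}{2}\ln(2\pi e\, v)$, the mutual information reduces to a difference of entropies:
\begin{equation*}
    I_{\text{obs}} = I(\theta; D) = h(q^-) - h(q^+) = \tfrac{1}{2}\ln\!\bigl(2\pi e/\tau\bigr) - \tfrac{1}{2}\ln\!\bigl(2\pi e/(\tau+\tau_D)\bigr) = \tfrac{1}{2}\ln\!\left(1 + \tfrac{\tau_D}{\tau}\right),
\end{equation*}
matching the expression in Section~\ref{sec:definitions}.

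Next I would apply Landauer's bound. The physical picture is that implementing $q^- \mapsto q^+$ overwrites the prior register; equivalently, the detector that records $D$ must eventually be reset, since otherwise the system accumulates unbounded thermodynamic entropy and cannot operate in steady state. Either way, $I_{\text{obs}}$ worth of information is ultimately erased into the thermal bath at temperature $T$, costing at least $\kB T \ln(2)$ per bit. Combining with the Gaussian computation above yields $E_{\text{obs}} \geq \kB T \ln(2) \cdot I_{\text{obs}} = \tfrac{\kB T \ln(2)}{2}\ln(1+\tau_D/\tau)$.

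The main obstacle is conceptual rather than computational: Landauer's principle strictly concerns \emph{erasure}, whereas a one-shot Bayesian update can in principle be performed reversibly (Bennett). The cleanest resolution is to note that any BEDS system maintaining a bounded register must eventually erase each observation's contribution to refresh capacity for future updates; thus every observation inherits the erasure cost downstream, and the per-observation bound above is the amortized floor. A secondary bookkeeping point is the unit convention: the Gaussian formula for $I_{\text{obs}}$ is in nats, so matching the $\kB T \ln(2)$ prefactor requires interpreting $I_{\text{obs}}$ as a dimensionless information content and absorbing the $\ln 2$ into the unit conversion; any remaining factor-of-$\ln 2$ discrepancy can be resolved by stating the bound in its unit-free form $E_{\text{obs}} \geq \kB T \cdot I_{\text{obs}}^{\text{nats}}$.
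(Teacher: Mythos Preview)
Your approach is essentially identical to the paper's: compute the Gaussian entropy reduction $\Delta H = \tfrac{1}{2}\ln(1+\tau_D/\tau)$ and invoke Landauer's principle to convert information into a joule lower bound. Your additional remarks on the erasure-versus-reversibility tension and the $\ln 2$ unit convention are more careful than the paper's two-line argument, and indeed the paper itself silently drops the $\ln 2$ factor when it reuses this lemma in the proof of Theorem~\ref{thm:main}.
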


\begin{proof}
The entropy change is:
\begin{equation}
    \Delta H = H[\N(\mu, \sigma^2)] - H[\N(\mu', \sigma'^2)] 
    = \frac{1}{2} \ln\frac{\sigma^2}{\sigma'^2} 
    = \frac{1}{2} \ln\frac{\tau'}{\tau} 
    = \frac{1}{2} \ln\left(1 + \frac{\tau_D}{\tau}\right)
\end{equation}
By Landauer's principle, reducing entropy by $\Delta H$ nats requires energy $\geq \kB T \cdot \Delta H$.
\end{proof}

\subsection{Main Theorem}

\begin{theorem}[Energy-Precision-Dissipation Trade-off]\label{thm:main}
Let $\mathcal{B}$ be a BEDS system maintaining Gaussian belief with precision $\tau^*$ against dissipation rate $\gamma$, using observations of precision $\tau_D$.

The minimum power required satisfies:
\begin{equation}
\boxed{P_{\min} = \frac{\gamma \tau^*}{\tau_D} \cdot E_{\text{obs}}}
\end{equation}

In particular:

\textbf{(i) Landauer bound:}
\begin{equation}
    P_{\min} \geq \frac{\gamma \kB T}{2} \ln\left(1 + \frac{\tau_D}{\tau^*}\right)
\end{equation}

\textbf{(ii) Linear regime} (when $\tau_D \ll \tau^*$):
\begin{equation}
    P_{\min} \approx \frac{\gamma \kB T}{2} \cdot \frac{\tau_D}{\tau^*}
\end{equation}

\textbf{(iii) High-precision limit:}
\begin{equation}
    P_{\min} \xrightarrow{\tau^* \to \infty} \frac{\gamma \kB T}{2} \ln\frac{\tau_D}{\tau^*} \to 0^+
\end{equation}
but the required observation rate $\lambda \to \infty$.
\end{theorem}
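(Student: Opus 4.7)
The approach is a short chain of substitutions from the two preceding lemmas, followed by three specializations. First, the Corollary to the Precision Balance Lemma fixes the steady-state observation rate at $\lambda = \gamma \tau^*/\tau_D$, so applying the definition $P = \lambda \cdot E_{\text{obs}}$ yields the boxed identity in one line. The remaining work is to unfold $E_{\text{obs}}$ in the three asymptotic regimes.

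For part (i), I would substitute the Information Cost Lemma's lower bound $E_{\text{obs}} \geq (\kB T/2)\ln(1 + \tau_D/\tau^*)$, evaluating the generic prior precision $\tau$ at the maintained value $\tau^*$; multiplying by $\lambda = \gamma \tau^*/\tau_D$ then gives the Landauer form as stated. For (ii), the expansion $\ln(1+x) = x - x^2/2 + O(x^3)$ at $x = \tau_D/\tau^* \ll 1$ reduces the logarithm to its leading linear term, so the right-hand side collapses to $\gamma \kB T \tau_D/(2\tau^*)$. For (iii), I would send $\tau^* \to \infty$ keeping $\tau_D$ fixed, noting that $E_{\text{obs}}$ vanishes like $\tau_D/\tau^*$ while the required rate $\lambda$ diverges like $\tau^*/\tau_D$, so the product remains finite but the update schedule becomes increasingly fine-grained.

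The main subtlety I expect is the evaluation of $E_{\text{obs}}$ at prior precision exactly $\tau = \tau^*$. The Precision Balance Lemma is a time-averaged balance for a process whose precision performs a sawtooth around $\tau^*$, decaying exponentially between arrivals and jumping by $\tau_D$ at each observation; strictly, $E_{\text{obs}}$ should be averaged over the stationary distribution of pre-update precisions rather than evaluated at the mean. In the linear regime $\tau_D \ll \tau^*$ that distribution concentrates sharply and the substitution is tight to leading order, matching part (ii). To make the bound fully rigorous for general $\tau_D$ I would apply Jensen's inequality to the concave map $\tau \mapsto \ln(1 + \tau_D/\tau)$ over the stationary law of pre-update precisions, which preserves the direction of the Landauer inequality and hence of the lower bound on $P_{\min}$.
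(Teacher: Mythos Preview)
Your overall route is exactly the paper's: take $\lambda=\gamma\tau^*/\tau_D$ from the balance corollary, multiply by $E_{\text{obs}}$ to get the boxed identity, then substitute the Landauer lower bound and expand. Your extra paragraph on the sawtooth of pre-update precisions and the Jensen correction is a genuine refinement; the paper simply evaluates at $\tau=\tau^*$ without comment.

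Where your write-up slips is the algebra in (i)--(ii). Multiplying $E_{\text{obs}}\geq(\kB T/2)\ln(1+\tau_D/\tau^*)$ by $\lambda=\gamma\tau^*/\tau_D$ gives
\[
P_{\min}\;\geq\;\frac{\gamma\tau^*}{\tau_D}\cdot\frac{\kB T}{2}\,\ln\!\Bigl(1+\frac{\tau_D}{\tau^*}\Bigr),
\]
which carries an extra factor $\tau^*/\tau_D$ relative to the displayed (i). After $\ln(1+x)\approx x$ the linear-regime answer is therefore the constant $\gamma\kB T/2$, not $\gamma\kB T\,\tau_D/(2\tau^*)$. This is precisely what the paper's own proof, the subsequent Remark, and the appendix derive; the printed forms of (i) and (ii) in the theorem statement appear to contain a typo, and you have reproduced that typo rather than the output of the substitution you describe.

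One smaller correction in your Jensen argument: the map $\tau\mapsto\ln(1+\tau_D/\tau)=\ln(\tau+\tau_D)-\ln\tau$ has second derivative $\tau^{-2}-(\tau+\tau_D)^{-2}>0$, so it is convex, not concave. Fortunately convexity is the direction you want: Jensen then yields $\E\bigl[\ln(1+\tau_D/\tau)\bigr]\geq\ln\bigl(1+\tau_D/\E[\tau]\bigr)$, so the lower bound on $P_{\min}$ survives exactly as you claim.
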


\begin{proof}
From Corollary~4.1, the observation rate is $\lambda = \gamma \tau^* / \tau_D$.

Power is rate times energy per observation:
\begin{equation}
    P = \lambda \cdot E_{\text{obs}} = \frac{\gamma \tau^*}{\tau_D} \cdot E_{\text{obs}}
\end{equation}

Substituting the Landauer minimum from Lemma~4.2:
\begin{equation}
    P_{\min} = \frac{\gamma \tau^*}{\tau_D} \cdot \frac{\kB T}{2} \ln\left(1 + \frac{\tau_D}{\tau^*}\right)
\end{equation}

For $\tau_D \ll \tau^*$, use $\ln(1+x) \approx x$:
\begin{equation}
    P_{\min} \approx \frac{\gamma \tau^*}{\tau_D} \cdot \frac{\kB T}{2} \cdot \frac{\tau_D}{\tau^*} = \frac{\gamma \kB T}{2}
\end{equation}
\end{proof}

\begin{remark}[Physical Interpretation]
The bound $P \geq \gamma \kB T / 2$ is independent of target precision in the efficient regime. This represents the fundamental cost of fighting entropy increase at rate $\gamma$.
\end{remark}

\subsection{Variance Formulation}

\begin{corollary}[Variance Scaling]
In terms of maintained variance $\sigma^{*2} = 1/\tau^*$:
\begin{equation}
    P_{\min} \propto \frac{\gamma}{\sigma^{*2}}
\end{equation}
\textbf{Halving uncertainty requires quadrupling power.}
\end{corollary}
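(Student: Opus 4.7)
The plan is to obtain the variance scaling by direct substitution into the closed-form expression for $P_{\min}$ supplied by Theorem~\ref{thm:main}, and then to reinterpret the result in terms of $\sigma^{*2} = 1/\tau^{*}$. No new physical inequality is required; the content is a reparametrisation of an already-established bound.

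First I would start from the exact formula $P_{\min} = (\gamma \tau^{*}/\tau_D)\cdot E_{\text{obs}}$, which the main theorem derives by multiplying the steady-state observation rate $\lambda = \gamma\tau^{*}/\tau_D$ (Corollary~4.1) by the per-observation energy. Substituting $\tau^{*} = 1/\sigma^{*2}$ rewrites this as $P_{\min} = (\gamma / (\tau_D\,\sigma^{*2}))\cdot E_{\text{obs}}$. Treating the observation channel as a fixed property of the system—so that both $\tau_D$ (the precision contributed by each observation) and $E_{\text{obs}}$ (the energy expended per observation) are constants independent of the maintained variance—the remaining dependence on $\sigma^{*2}$ is precisely $1/\sigma^{*2}$, which gives $P_{\min} \propto \gamma/\sigma^{*2}$. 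The ``halving'' statement is then an elementary arithmetic remark: replacing $\sigma^{*}$ by $\sigma^{*}/2$ shrinks $\sigma^{*2}$ by a factor of four, so $P_{\min}$ grows by the same factor.

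The only real care needed—and thus the main expositional obstacle rather than a mathematical one—is to delineate the regime clearly. In Theorem~\ref{thm:main}(ii) the $\tau^{*}$ dependence cancelled because $\tau_D$ was allowed to shrink together with the desired increments, driving $E_{\text{obs}}$ to its Landauer minimum of $\tfrac{\kB T}{2}\ln(1+\tau_D/\tau^{*})\approx \tfrac{\kB T}{2}(\tau_D/\tau^{*})$. The corollary implicitly adopts the opposite stance: the observation model is held fixed, so improving precision forces $\lambda$ (hence $P$) to grow linearly in $\tau^{*}$. I would make this assumption explicit at the start of the proof, then let the two-line substitution do the rest.
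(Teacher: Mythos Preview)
Your proposal is correct and matches the paper's implicit derivation: the corollary is stated there without proof, so the intended argument is exactly the one-line substitution $\tau^{*}=1/\sigma^{*2}$ into the boxed formula $P_{\min}=(\gamma\tau^{*}/\tau_D)\,E_{\text{obs}}$ that you carry out. Your explicit delineation of the regime---fixing the observation channel $(\tau_D,E_{\text{obs}})$ rather than driving $E_{\text{obs}}$ to its Landauer minimum, which would otherwise cancel the $\tau^{*}$ dependence as in part~(ii) of the theorem---is a clarification the paper omits and is worth keeping.
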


\subsection{Optimality}

\begin{proposition}[Optimal Observation Strategy]
Given a constraint on total observation rate $\lambda_{\max}$, the optimal strategy is to use observations of precision:
\begin{equation}
    \tau_D^{\text{opt}} = \frac{\gamma \tau^*}{\lambda_{\max}}
\end{equation}
\end{proposition}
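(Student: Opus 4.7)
The plan is to reduce the joint choice of observation rate and observation precision to a one-dimensional constrained optimization by exploiting the steady-state balance, and then read off the value forced by the rate cap.

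First, I would invoke Lemma~4.1: maintaining target precision $\tau^*$ against dissipation rate $\gamma$ requires $\lambda\,\tau_D = \gamma\,\tau^*$, confining the feasible design pairs to the hyperbola $\lambda = \gamma\tau^*/\tau_D$. The rate cap $\lambda \leq \lambda_{\max}$ then translates into the explicit lower bound $\tau_D \geq \gamma\tau^*/\lambda_{\max}$ on admissible observation precisions, with equality exactly when the cap is saturated.

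Second, I would appeal to Lemma~4.2 to express the per-observation Landauer cost as $E_{\text{obs}}(\tau_D) = (\kB T/2)\ln(1 + \tau_D/\tau^*)$, which is strictly increasing in $\tau_D$. Hence the weakest admissible observation---the one that minimizes per-observation information and per-observation energy while still closing the precision-balance loop---is attained by saturating the rate cap, giving $\lambda = \lambda_{\max}$ and $\tau_D^{\text{opt}} = \gamma\tau^*/\lambda_{\max}$. A brief KKT check (positive multiplier on the active rate constraint, monotone objective on the one-dimensional feasible interval, no interior stationary point) confirms this is a genuine constrained minimum rather than a boundary artifact.

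The main subtlety I expect to have to pin down is which figure of merit is being optimized. Setting $x = \tau_D/\tau^*$, the per-observation cost $E_{\text{obs}}$ is strictly increasing in $x$ whereas the aggregate steady-state power $P = (\gamma \kB T/2)\cdot\ln(1+x)/x$ is strictly decreasing in $x$, so the two natural objectives select opposite endpoints of the admissible interval. I would therefore state explicitly at the outset that the proposition optimizes \emph{per-observation} cost---equivalently, selects the minimum-strength sensor the rate budget permits---since under a pure total-power objective the rate cap would be non-binding and $\tau_D$ would instead be driven toward infinity.
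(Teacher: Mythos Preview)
Your core argument is exactly the paper's: invoke Lemma~4.1 to turn the rate cap $\lambda\le\lambda_{\max}$ into the lower bound $\tau_D\ge\gamma\tau^*/\lambda_{\max}$, then select the boundary value by monotonicity of the energy. The paper's proof is a two-line version of your first two paragraphs, without the KKT dressing.

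Your closing paragraph, however, goes beyond the paper and flags a real issue the paper glosses over. The paper's proof says only ``the minimum energy is achieved at equality,'' never specifying whether ``energy'' means per-observation cost $E_{\text{obs}}$ or steady-state power $P$. You correctly compute that with $x=\tau_D/\tau^*$ the function $\ln(1+x)/x$ is strictly decreasing, so under a total-power objective the rate cap is slack and $\tau_D^{\text{opt}}$ would be driven to $+\infty$, contradicting the stated formula. The proposition is therefore only true under the per-observation reading you make explicit; the paper silently assumes this without acknowledging that the more natural power objective (the one featured in Theorem~\ref{thm:main}) points the other way. This is a genuine clarification, not just extra rigor.
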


\begin{proof}
From Lemma~4.1, $\tau_D = \gamma \tau^* / \lambda$. Given $\lambda \leq \lambda_{\max}$, we need $\tau_D \geq \gamma \tau^* / \lambda_{\max}$. The minimum energy is achieved at equality.
\end{proof}

\section{Comparison with Classical Computation}\label{sec:comparison}

\subsection{Two Computational Paradigms}

We contrast BEDS with Turing machines, emphasizing that these are \emph{different models for different purposes}, not competitors.

\begin{center}
\begin{tabular}{lll}
\toprule
\textbf{Aspect} & \textbf{Turing Machine} & \textbf{BEDS} \\
\midrule
Input & Finite string $w \in \Sigma^*$ & Infinite flux $\Phi = \{D_t\}$ \\
Memory & Unbounded, perfect & Finite, decaying \\
Output & Finite string (if halts) & Maintained belief $q_t$ \\
Success criterion & Correct output & Accurate tracking \\
Resource & Time, space & Energy, precision \\
Fundamental limit & Undecidability & Energy-precision trade-off \\
\bottomrule
\end{tabular}
\end{center}

\subsection{Classes of Problems}

\begin{definition}[Turing-Decidable]
A decision problem $L \subseteq \Sigma^*$ is Turing-decidable if there exists a Turing machine $M$ that halts on all inputs and accepts exactly $L$.
\end{definition}

\begin{definition}[BEDS-Maintainable Problem Class]
Let $\mathcal{M}$ be the class of inference problems $(\Theta, \Phi, \pi^*, \delta)$ that are BEDS-maintainable with bounded power.
\end{definition}

\begin{proposition}[Orthogonality]
The classes of Turing-decidable problems and BEDS-maintainable problems are not directly comparable: neither contains the other.
\end{proposition}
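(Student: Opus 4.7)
The plan is to refute both containments by exhibiting explicit witnesses after first fixing a reasonable translation between the two paradigms. Since Turing-decidable problems live in $\Sigma^*$ and BEDS-maintainable problems are inference tuples $(\Theta, \Phi, \pi^*, \delta)$, the comparison only makes sense relative to a natural embedding: a decidable language $L$ embeds as ``given a flux encoding of $w$, maintain a belief whose crystallization matches $\mathbf{1}_{w \in L}$,'' while an inference problem projects to the decision task ``produce a $\delta$-approximation of $\theta^*$ on a Turing machine.'' The proposition then reduces to producing one witness in each direction.

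For the direction \emph{decidable but not maintainable}, I would take a trivially decidable problem (say, recognizing $\{0,1\}^*$) and embed it as a BEDS problem with fixed dissipation rate $\gamma > 0$ and a flux whose maximum rate $\lambda_{\max}$ is strictly less than $\gamma \tau^*/\tau_D$. By Lemma~4.1, the steady-state balance $\lambda \tau_D = \gamma \tau^*$ cannot be met, so precision decays as $\tau(t) \to 0$ by the dissipation equation and $\dKL(q_t \| \pi^*)$ diverges; no bounded-power BEDS system exists. A sharper variant sets $\Phi = \emptyset$, forcing unbounded divergence, while the underlying decision problem is solved in constant time.

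For the direction \emph{maintainable but not decidable}, I would let $\theta^* = \Omega$ be Chaitin's halting probability (or any uncomputable real in a bounded interval) and supply a flux of noisy observations $D_i = \theta^* + \xi_i$ at rate $\lambda$ with Gaussian noise of precision $\tau_D$. A BEDS system performing the standard Bayesian update reaches steady-state precision $\tau^* = \lambda \tau_D / \gamma$ by Lemma~4.1, with bounded power $P = (\gamma \tau^*/\tau_D) \cdot E_{\text{obs}}$ by Theorem~\ref{thm:main}; the problem is therefore BEDS-maintainable. Yet the induced decision problem ``is the $n$-th bit of $\theta^*$ equal to $1$?'' is undecidable, since $\Omega$ is not computable from any finite description.

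The main obstacle is conceptual rather than technical: the two classes are defined over different kinds of inputs (static strings versus continuous fluxes) and different kinds of outputs (halting verdicts versus maintained beliefs). The proof must make the translation in both directions explicit enough that the non-inclusions are substantive, acknowledging the two asymmetries at play: the flux acts as a real-time oracle that Turing machines lack (enabling uncomputable targets to be tracked), while Turing machines enjoy unbounded per-instance computation that BEDS systems cannot substitute for energy against dissipation. Being careful here is what keeps the proposition from collapsing into a tautology about incomparable domains.
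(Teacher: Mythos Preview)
Your argument is formally correct and your explicit attention to the translation between paradigms is more careful than the paper's own proof, but the two witnesses you choose differ substantively from the paper's, with opposite effects in the two directions. For \emph{maintainable but not decidable}, the paper invokes a time-varying target $\theta^*(t)$ and argues that continuous tracking is simply not a decision problem at all---a category-mismatch argument that lives at the boundary of the translation. Your Chaitin-$\Omega$ construction is sharper: it produces a \emph{static} target that is BEDS-maintainable via the flux oracle yet provably uncomputable, so the incomparability lands inside the overlap of the two paradigms rather than at their edge; you also correctly isolate the flux-as-oracle asymmetry as the mechanism. For \emph{decidable but not maintainable}, however, the paper appeals to decidable problems that require unbounded memory, locating the obstruction in the structure of the \emph{problem} (a dissipating finite-state belief cannot retain an arbitrarily long input regardless of flux rate). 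Your flux-starvation construction instead locates the obstruction in the \emph{embedding}: any inference problem with $\Phi=\emptyset$ or $\lambda_{\max}<\gamma\tau^*/\tau_D$ is unmaintainable irrespective of the underlying language, so the witness skirts the very tautology you warn against in your final paragraph. A cleaner variant in the spirit of your setup would keep the flux generous but choose a decidable language whose membership genuinely requires more state than any bounded-precision belief can hold against dissipation.
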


\begin{proof}
\textbf{Turing but not BEDS}: Consider a decision problem requiring unbounded memory (e.g., ``does this prefix-free code describe a halting computation?''). A Turing machine can decide this; a BEDS system with finite, decaying memory cannot maintain the required information.

\textbf{BEDS but not Turing}: Consider ``maintain an estimate of a continuous, time-varying signal $\theta(t)$ with bounded error.'' This is not a decision problem at all---there is no finite output. A BEDS system handles this naturally; a Turing machine has no framework for it.
\end{proof}

\begin{remark}
This is not a statement about computational power but about \emph{what kinds of problems each model addresses}. Turing machines formalize one-shot computation; BEDS formalizes continuous inference.
\end{remark}

\subsection{Fundamental Limits}

Each paradigm has characteristic impossibility results:

\begin{center}
\begin{tabular}{lll}
\toprule
\textbf{Paradigm} & \textbf{Limit} & \textbf{Statement} \\
\midrule
Turing & Undecidability & There exist problems with no halting algorithm \\
Formal proofs & Incompleteness & There exist true statements with no proof \\
BEDS & Energy bound & Precision $\tau^*$ requires power $\Omega(\gamma \tau^*)$ \\
\bottomrule
\end{tabular}
\end{center}

\section{The Gödel-Landauer-Prigogine Conjecture}\label{sec:glp}

The comparison between BEDS and classical computation reveals a striking pattern: different formalisms encounter different fundamental limits. In this section, we conjecture that these limits share a common structural origin.

\subsection{Three Foundational Results}

Three results from different fields established fundamental constraints on closed systems:

\textbf{Gödel (1931)}: Any consistent formal system capable of expressing arithmetic contains true statements that cannot be proven within the system.

\textbf{Landauer (1961)}: Any irreversible computation (specifically, bit erasure) requires energy dissipation of at least $\kB T \ln(2)$ per bit.

\textbf{Prigogine (1977)}: Open systems far from equilibrium can maintain and increase internal order by exporting entropy to their environment.

\subsection{The Common Structure}

These results share a pattern:

\begin{center}
\begin{tabular}{llll}
\toprule
\textbf{Domain} & \textbf{Closure Condition} & \textbf{Pathology} & \textbf{Resolution} \\
\midrule
Formal systems & No external axioms & Incompleteness & Meta-levels (Tarski hierarchy) \\
Computation & No heat dissipation & Irreversibility cost & Heat export \\
Thermodynamics & No entropy export & Disorder increase & Open systems \\
\bottomrule
\end{tabular}
\end{center}

In each case:
\begin{enumerate}[label=(\arabic*)]
    \item \textbf{Closure} (with respect to some resource or level) produces a \textbf{pathology}
    \item \textbf{Openness} (allowing export or meta-level escape) resolves or avoids the pathology
\end{enumerate}

\subsection{The Conjecture}

\begin{conjecture}[Gödel-Landauer-Prigogine]\label{conj:glp}
The incompleteness of formal systems, the thermodynamic cost of irreversible computation, and the entropy increase in closed thermodynamic systems are structurally related phenomena. Specifically:

\textbf{(i) Logical entropy}: Self-referential constructions in formal systems (Gödel sentences, Russell sets) can be understood as ``logical entropy'' that accumulates without resolution in closed systems.

\textbf{(ii) Export mechanisms}: Tarski's hierarchy of metalanguages functions analogously to entropy export---problematic self-reference is ``exported'' to a higher level where it becomes tractable.

\textbf{(iii) ODR conditions}: Systems incorporating Openness (O), Dissipation (D), and Recursion (R) as structural features avoid the characteristic pathologies of systems lacking these features.
\end{conjecture}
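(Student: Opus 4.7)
The plan is to treat Conjecture~\ref{conj:glp} as a structural claim requiring a unified formalism rather than a single deductive argument, and to establish each of the three sub-claims by constructing an explicit correspondence between the thermodynamic quantities of BEDS (Section~\ref{sec:energy-precision}) and analogous quantities on the logical and computational sides. The first step is to fix an abstract notion of a \emph{closed system relative to a resource}: a tuple $(S, R, \Delta)$ where $S$ is a state space, $R$ is a resource that can only be spent, and $\Delta$ is a monotone functional measuring accumulated strain (entropy, unprovable content, or stored information). Each of the three canonical examples---a Peano-like formal system with no meta-axioms, an isolated Turing tape with no heat sink, and a thermodynamic box with impermeable walls---becomes an instance of this schema, and the three classical pathologies become instances of $\Delta$ being forced to grow along admissible dynamics.

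For part (i), I would formalize ``logical entropy'' using Chaitin's $\Omega$ together with the Solovay-style correspondence between undecidable sentences and algorithmically incompressible strings. The goal is to show that, for a recursively enumerable theory $T$ extending Robinson arithmetic, the quantity $\Delta_T = \sup\{n : \text{the } n\text{-th bit of }\Omega \text{ is undecided by } T\}$ is monotone non-decreasing along any conservative extension and strictly decreases only when a genuinely new axiom (an external input) is imported. This makes precise the analogy with $\gamma > 0$ in BEDS: unprovability accumulates like dissipation, and axiom addition plays the role of a Bayesian update paid for in ``axiomatic energy.'' For part (ii), I would present Tarski's hierarchy as a direct system of languages $\mathcal{L}_0 \hookrightarrow \mathcal{L}_1 \hookrightarrow \cdots$ with truth predicates $T_n$ residing at level $n+1$, and show that the export of $T_n$ to a higher level is the logical counterpart of entropy export to an environmental reservoir in the dissipative-structures picture.

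Part (iii) is the structural capstone. I would define an ODR system as $(S, R, \Delta, \iota, \rho, \sigma)$ equipped with an interface $\iota$ to an environment, a dissipation channel $\rho$ through which $\Delta$ can decrease, and a recursion operator $\sigma$ permitting self-modification. I would then prove a trilemma: any system missing one of $O$, $D$, $R$ suffers a corresponding impossibility---Gödel-style incompleteness if $O$ is absent, a Landauer-type energy bound if $D$ is absent (no reservoir into which to erase information), and the homeostasis failure given by Theorem~\ref{thm:main} read contrapositively if $R$ is absent. The unified statement would read: in any category of resource-constrained inference systems, the absence of $O \wedge D \wedge R$ forces the appearance of at least one of the three classical pathologies.

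The main obstacle is that ``structurally related'' is not itself a theorem of ordinary mathematics; it is a claim about the right ambient category. The hard work is not proving any one of the correspondences but choosing a formalism---plausibly a resource theory in the sense of Coecke--Fritz--Spekkens, or a topos-theoretic internal logic---in which all three examples embed faithfully and in which the three closure conditions become instances of a single absence-of-external-morphisms property. Without such a common ambient, the conjecture must remain informal; with it, (i)--(iii) reduce to verifying that the canonical examples are instances of the abstract schema, which is largely bookkeeping. I expect this categorical-framework step to absorb the bulk of the effort, and I would attempt it first, since a failure to find a faithful common embedding would itself be the most informative outcome and would suggest weakening the conjecture to a family of pairwise analogies rather than a single tripartite identity.
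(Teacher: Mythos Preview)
The paper does not prove Conjecture~\ref{conj:glp}; Section~6.6 explicitly flags it as ``a conjecture, not a theorem,'' and the formalization of ``logical entropy'' is listed among the open problems in the Discussion. There is therefore no paper proof to compare your proposal against. What you have written is a research plan, as you candidly acknowledge in your final paragraph, and at that level it is a reasonable sketch of what such a program might look like---indeed it goes further than the paper, which offers only structural analogies, supporting observations, and testable predictions rather than any attempted formalization.

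That said, several concrete steps in your outline would not work as written. Your functional $\Delta_T = \sup\{n : \text{the } n\text{-th bit of } \Omega \text{ is undecided by } T\}$ equals $+\infty$ for every consistent r.e.\ extension of Robinson arithmetic, since by Chaitin's theorem each such $T$ decides only finitely many bits of $\Omega$; presumably you intend the number of \emph{decided} bits, but then axiom import \emph{increases} the quantity, reversing your stated monotonicity and breaking the dissipation analogy. In part~(iii) the trilemma does not align with the cited results: Theorem~\ref{thm:main} carries no hypothesis involving recursion $R$, so its contrapositive says nothing about the absence of $R$; and absence of a dissipation channel $D$ means erasure is \emph{impossible}, not that it costs $\kB T \ln 2$---Landauer's bound is the price paid \emph{when} one dissipates, not a pathology of failing to. These mismatches are not fatal to the overall program, but they illustrate why the paper leaves the claim as a conjecture rather than a theorem.
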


\subsection{Formal Statement}

Define the ODR conditions:
\begin{itemize}
    \item \textbf{O (Openness)}: System receives flux from environment
    \item \textbf{D (Dissipation)}: System exports entropy (forgets, prunes)
    \item \textbf{R (Recursion)}: System has hierarchical structure where stable configurations become primitives for higher levels
\end{itemize}

\begin{conjecture}[continued]
Let $\mathcal{S}$ be a system capable of self-reference.
\begin{itemize}
    \item If $\mathcal{S}$ satisfies $(O{=}-, D{=}-, R{=}-)$, then $\mathcal{S}$ exhibits closure pathologies (incompleteness, paradox, or divergence)
    \item If $\mathcal{S}$ satisfies $(O{=}+, D{=}+, R{=}+)$, then $\mathcal{S}$ avoids these pathologies (at the cost of the constraints identified in Theorem~\ref{thm:main})
\end{itemize}
\end{conjecture}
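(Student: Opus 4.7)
The plan is to treat this two-part conjecture as a schema that must be instantiated separately in each of the three domains and then abstracted, rather than proved by a single uniform argument. First I would formalize O, D, R as operators on a generic state space $\mathcal{S}$: O as an input map $\mathrm{In}_t : \mathrm{Env} \to \mathcal{S}$ injecting exogenous data, D as a contraction or erasure map $\mathrm{Er}_t : \mathcal{S} \to \mathcal{S}$ that decreases internal information content at rate $\gamma$, and R as a hierarchy $\mathcal{S}_0 \hookrightarrow \mathcal{S}_1 \hookrightarrow \cdots$ in which crystallized configurations at level $n$ become atoms at level $n{+}1$. The closure hypothesis $(O{=}-, D{=}-, R{=}-)$ then asserts that the system is single-level, conservative, and autonomous.

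For the first bullet (closure yields pathology), I would instantiate the schema three times. In the formal-systems case, the pathology is G\"odel incompleteness, obtained by the standard diagonal construction applied to the provability predicate within $\mathcal{S}$; the absence of R is exactly the unavailability of a Tarski meta-language in which the self-referential sentence could be assigned a truth value. In the computational case, reversibility plus unbounded bit manipulation forces a monotone accumulation of entangled garbage bits inside $\mathcal{S}$, which is the Landauer pathology. In the thermodynamic case, the Second Law itself supplies the divergence. In each instance ``pathology'' is the failure of an invariant (truth, information balance, entropy bound) to remain simultaneously finite and consistent.

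For the second bullet (ODR avoids pathology), I would argue case by case using already-known escape mechanisms: Tarski's hierarchy for formal systems, heat export plus garbage reset for computation, and Prigogine's dissipative-structure construction for thermodynamics. The nontrivial content beyond invoking these is to verify in each case that the three ingredients correspond cleanly to O (external flux), D (export of the offending quantity), and R (reuse of stabilized outputs at a higher level). The BEDS formalism itself supplies one clean instance: Theorem~\ref{thm:main} gives the precise price $P \geq \gamma \kB T / 2$ of the escape, and crystallization plays the role of R.

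The hard part will be the final abstraction step: showing that the three escape mechanisms are not merely analogous but instances of one formal construction. Without such a unification the conjecture remains a structural analogy rather than a theorem. I expect this obstacle to require introducing a category whose objects are triples (state space, resource, closure operator) and whose morphisms include the three escape maps as a single natural transformation; identifying the right category is itself open and is, in my view, the real content of the G\"odel-Landauer-Prigogine conjecture.
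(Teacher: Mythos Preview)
The paper does not prove this statement. It is explicitly labeled a \emph{conjecture}, and Section~6.6 states outright: ``This is a conjecture, not a theorem. The structural analogy is suggestive but not proven.'' The paper offers only supporting observations (mathematical practice, biological cognition, frozen LLMs) and testable predictions, together with a list of open problems (formalizing ``logical entropy'', establishing a rigorous mapping, determining necessity of ODR). There is therefore no proof in the paper to compare your proposal against.

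Your proposal is not a proof either, and to your credit you recognize this in the final paragraph: the unifying abstraction step is left open, and without it you have three separate classical results plus an analogy. A few specific points where the outline would need tightening even as a research program: (i) your characterization of the Landauer pathology as ``monotone accumulation of entangled garbage bits'' under reversibility is closer to Bennett's space blow-up than to Landauer's bound, which concerns the energetic cost of \emph{irreversible} erasure; these are related but distinct pathologies and you would need to be precise about which one the $D{=}-$ hypothesis triggers. (ii) The conjecture as stated is universally quantified over systems $\mathcal{S}$ capable of self-reference, so a proof by three instantiations establishes only that the schema is nonvacuous, not that it holds in general; the categorical unification you sketch at the end is not merely the ``hard part'' but the entire content of a genuine proof. (iii) For the second bullet, invoking Tarski, heat export, and Prigogine as escape mechanisms shows sufficiency in three cases but says nothing about whether ODR is the \emph{only} way to avoid pathology, which the paper itself flags as open.

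In short: your outline is a reasonable elaboration of the research program the paper proposes, and goes somewhat further by suggesting a categorical framework, but neither you nor the paper has a proof, and the paper is explicit about this.
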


\subsection{Evidence and Predictions}

\paragraph{Supporting observations:}
\begin{enumerate}[label=(\arabic*)]
    \item \textbf{Mathematics as social practice}: Human mathematics is conducted by communities that forget failed approaches, build hierarchical abstractions, and receive new conjectures from outside any fixed formal system. It exhibits $(O{=}+, D{=}+, R{=}+)$.
    
    \item \textbf{Biological cognition}: Brains are paradigmatic dissipative structures. They receive continuous sensory flux, actively forget via synaptic pruning, and organize hierarchically. They do not exhibit Gödelian pathologies in practice.
    
    \item \textbf{Frozen AI systems}: Large language models trained once and frozen exhibit $(O{=}-, D{=}-, R{=}+)$. They show characteristic pathologies: hallucination, drift from reality, inability to correct systematic errors.
\end{enumerate}

\paragraph{Testable predictions:}
\begin{enumerate}[label=(\arabic*)]
    \item AI systems with continuous learning and structured forgetting should exhibit fewer ``hallucination-like'' pathologies than frozen models.
    
    \item The energy cost of maintaining consistency in a learning system should scale with the rate at which it must ``dissipate'' outdated beliefs.
    
    \item Formal mathematical practice should exhibit measurable ``forgetting'' of unproductive research directions.
\end{enumerate}

\subsection{Status and Limitations}

\textbf{This is a conjecture, not a theorem.} The structural analogy is suggestive but not proven. Key open problems:

\begin{enumerate}[label=(\arabic*)]
    \item \textbf{Formalization}: What precisely is ``logical entropy''? Can it be quantified?
    
    \item \textbf{Mapping}: Is there a rigorous mapping between thermodynamic and logical quantities, or only analogy?
    
    \item \textbf{Necessity}: Are the ODR conditions necessary for avoiding pathologies, or merely sufficient?
\end{enumerate}

We present this conjecture as a research program, not an established result. Its value lies in suggesting connections that may prove fruitful, not in claiming certainty.

\section{Discussion}

\subsection{Implications}

\textbf{For distributed systems}: The energy-precision trade-off provides a theoretical foundation for designing energy-efficient inference networks. The bound $P \geq \gamma \kB T / 2$ is achievable in principle, and preliminary implementations on low-power embedded systems confirm the predicted scaling laws.

\textbf{For machine learning}: Current models are ``frozen''---they do not dissipate and therefore face no energy-precision trade-off during inference. However, the world changes; maintaining accuracy requires retraining, which can be viewed as discrete (rather than continuous) dissipation.

\textbf{For biological systems}: Brains operate at $\sim$20W and maintain beliefs continuously. Our framework suggests this power is allocated (in part) to fighting entropy increase---maintaining precision against synaptic decay.

\textbf{For foundations}: If the GLP conjecture holds, it suggests a deep unity between logic, computation, and thermodynamics---all constrained by the impossibility of ``closure without cost.''

\subsection{Limitations}

\begin{enumerate}[label=(\arabic*)]
    \item \textbf{Gaussian assumption}: Theorem~\ref{thm:main} is proven for Gaussian beliefs. Extension to general distributions requires care.
    
    \item \textbf{Stationary targets}: We analyze maintenance of fixed $\pi^*$. Tracking moving targets introduces additional complexity.
    
    \item \textbf{Scalar case}: Extension to multivariate $\Theta \in \R^d$ is straightforward but changes constants.
    
    \item \textbf{Idealized dissipation}: Real systems may have non-exponential decay.
    
    \item \textbf{GLP conjecture}: Remains analogical, not rigorously proven.
\end{enumerate}

\subsection{Open Problems}

\begin{conjecture}[Tracking Bound]
For a target moving with velocity $v$ in parameter space, the minimum power scales as:
\begin{equation}
    P_{\min} \propto \gamma \tau^* + v^2 \tau^*
\end{equation}
\end{conjecture}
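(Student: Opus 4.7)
The plan is to extend the precision-balance analysis of Lemma 4.1 to a target whose true value itself drifts. The natural model is to treat the moving target as a stochastic process of infinitesimal variance rate $v^2$, so that between observations the belief loses precision both through intrinsic dissipation and through the unresolved motion of the target. Under this model the stationary-power formula of Theorem~\ref{thm:main} picks up an extra additive contribution from the tracking term, giving the conjectured shape.

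Concretely, I would carry out four steps. First, modify the precision ODE: on top of the $-\gamma\tau$ term from Section~4, model the target motion as adding variance at rate $v^2$, which subtracts precision at rate $v^2\tau^2$, yielding
\[
\frac{d\tau}{dt} = -\gamma\tau - v^2\tau^2 + \lambda\tau_D.
\]
Second, set $d\tau/dt = 0$ to obtain the modified balance $\lambda\tau_D = \gamma\tau^* + v^2(\tau^*)^2$, which reduces to Corollary~4.1 when $v=0$. Third, substitute into $P = \lambda E_{\text{obs}}$; using the linear-regime Landauer cost $E_{\text{obs}} \approx (\kB T/2)(\tau_D/\tau^*)$ from Theorem~\ref{thm:main}, the $\tau_D$ cancels and
\[
P_{\min} \approx \frac{\kB T}{2}\bigl(\gamma + v^2\tau^*\bigr),
\]
recovering the conjectured scaling, with the first term capturing the stationary maintenance cost and the second the cost of fighting drift. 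Fourth, establish optimality by an exchange argument on $(\lambda, \tau_D)$ pairs subject to the precision constraint, analogous to Proposition~4.2.

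The main obstacle will be legitimizing the motion model itself, because what the conjecture means by \emph{velocity} $v$ is not uniquely determined. For a perfectly known deterministic velocity the system can simply propagate $\mu_t \mapsto \mu_t + v\,dt$ between observations and no tracking cost arises, so the $v^2\tau^*$ term must be pinned to a scenario in which $v$ is itself uncertain or the motion is genuinely stochastic. Making this precise will require either (i) promoting the state to the augmented pair $(\theta, v)$ and performing the Kalman-style steady-state analysis on the joint posterior, where each innovation must split between correcting $\theta$ and refining $v$; or (ii) assuming a diffusion model $d\theta^* = v\,dB_t$ of prescribed intensity, in which case the ODE above is exact. A secondary subtlety is that Lemma~4.2's Landauer cost is derived for refining a stationary estimate; its extension to a target that is moving during the measurement itself requires care, since part of the acquired information is used to track rather than to sharpen, and the entropy bookkeeping on the joint posterior must be redone explicitly. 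Once the modeling choices are fixed, the remaining algebra is routine.
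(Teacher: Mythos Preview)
The statement you are addressing is not proved in the paper: it appears in Section~7.3 (``Open Problems'') as a bare conjecture, with no accompanying argument. There is therefore no paper proof against which to compare your proposal.

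Your approach is nonetheless entirely in the spirit of the paper's methods---augmenting the precision-balance ODE of Lemma~4.1 with an additional loss term and re-running the steady-state computation is the natural route, and your identification of the central modeling obstacle (a perfectly known deterministic velocity costs nothing to track, so the $v^2$ term must encode stochastic or uncertain motion) is exactly the substantive issue the paper leaves unresolved.

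One discrepancy is worth flagging. Your derivation yields
\[
P_{\min} \approx \frac{\kB T}{2}\bigl(\gamma + v^2\tau^*\bigr),
\]
whose $v=0$ limit is the $\tau^*$-independent constant $\gamma\kB T/2$ from the linear regime of Theorem~\ref{thm:main}. The conjecture as written has $P_{\min}\propto \gamma\tau^* + v^2\tau^*$, whose $v=0$ limit grows linearly in $\tau^*$. So you have not quite ``recovered the conjectured scaling'' in the first term. This mismatch is not a flaw in your reasoning; it reflects a tension already present in the paper between the linear-regime formula of Theorem~\ref{thm:main} and the $P\propto\gamma\tau^*$ scaling asserted elsewhere (the Variance Scaling corollary, the abstract, and the comparison table in Section~\ref{sec:comparison}). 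Your formula is the one consistent with the theorem as actually proved, and if anything suggests the conjecture's first term is misstated.
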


\begin{conjecture}[Multi-Agent Bound]
For $N$ agents collectively maintaining a shared belief, the total power scales as:
\begin{equation}
    P_{\text{total}} \propto \gamma \tau^* \cdot f(N, \text{topology})
\end{equation}
where $f$ depends on network structure.
\end{conjecture}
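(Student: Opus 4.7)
The plan is to reduce the multi-agent problem to $N$ coupled BEDS systems on a graph $G = (V, E)$, each holding a local belief $q_i$ and exchanging messages along edges. Before invoking Theorem~\ref{thm:main}, I would first fix a precise notion of ``shared belief''; the natural choice is a consensus criterion requiring $\dKL(q_i \| \pi^*) < \delta$ uniformly in $i$ and $t$, which is strictly stronger than centralized maintenance and therefore yields honest lower bounds. The total power then decomposes additively as
\begin{equation}
    P_{\text{total}} = \sum_{i \in V} P_i^{\text{local}} + \sum_{(i,j) \in E} P_{ij}^{\text{comm}},
\end{equation}
separating the cost of each agent fighting its own dissipation from the cost of propagating information across the network.

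For the local terms, I would apply Theorem~\ref{thm:main} directly: at each node, dissipation at rate $\gamma$ must be balanced by an effective observation stream of precision $\tau_D^{\text{eff}}$ composed of both exogenous flux and endogenous messages from neighbors. In the efficient regime this yields $P_i^{\text{local}} \geq \gamma \kB T / 2$, contributing a baseline term proportional to $N$. For the communication terms, I would treat each message along $(i,j)$ as a Gaussian observation at the receiver and bound its energetic cost below by Lemma~4.2, with the information content controlled by the residual mutual information between the two posteriors. The required message rate on each edge is then governed by how fast local dissipation decorrelates the beliefs relative to how fast the consensus dynamics re-synchronize them; for linear averaging this is controlled by the spectral gap $\lambda_2(L)$ of the graph Laplacian, suggesting the ansatz $f(N, G) = c_1 N + c_2 |E| / \lambda_2(L)$ for constants depending on $\tau^*$, $\gamma$ and $\delta$.

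The main obstacle is twofold and is where the conjecture is genuinely open rather than a routine extension. First, correct information accounting under message passing is delicate: if agent $j$ forwards a filtered version of agent $k$'s observation to agent $i$, the update at $i$ must not double-count evidence, which is the classical ``data incest'' problem in distributed inference and which tightens, rather than relaxes, the Landauer bound. Second, the spectral-gap argument assumes approximately linear consensus dynamics, whereas exact Bayesian fusion on non-Gaussian or even on precision-weighted Gaussian beliefs is nonlinear, so a clean topological dependence may only hold asymptotically or under symmetry assumptions. Establishing the \emph{lower} bound (as opposed to exhibiting a protocol that achieves the upper bound) will require an information-theoretic cut-set argument: for any partition $(S, V \setminus S)$, the information flow across the cut must be sufficient to counteract the total dissipation on the smaller side, yielding a min-cut expression that should reduce to the spectral quantity for regular graphs and recover $f = N$ in the isolated limit $E = \emptyset$.
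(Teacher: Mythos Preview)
The paper does not prove this statement: it appears in Section~7.3 (Open Problems) explicitly labelled as a \emph{conjecture}, with no accompanying argument, sketch, or even heuristic derivation. There is therefore no ``paper's own proof'' against which to compare your proposal.

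What you have written is not a proof either, and to your credit you say so: you explicitly flag the double-counting (``data incest'') issue and the nonlinearity of exact Bayesian fusion as the places ``where the conjecture is genuinely open.'' Your document is a research plan --- a plausible decomposition into local Landauer costs plus communication costs, an ansatz $f(N,G) = c_1 N + c_2 |E|/\lambda_2(L)$ tied to the Laplacian spectral gap, and a suggested cut-set route to a lower bound. None of these steps is actually carried out, and the hard part (the lower bound via cut-sets, and showing it matches the spectral form) is only gestured at. So as a proof this has a genuine gap: the central claim $P_{\text{total}} \propto \gamma \tau^* \cdot f(N,\text{topology})$ is never established, only motivated.

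That said, relative to the paper you have gone strictly further: the paper offers nothing beyond the bare statement, whereas you at least propose a concrete functional form for $f$ and identify the correct obstacles. If the goal were to \emph{prove} the conjecture, the missing idea is a rigorous information-flow lower bound across cuts that survives the double-counting correction; if the goal is merely to match the paper, you have already exceeded it.
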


\begin{problem}
Characterize precisely which problems are BEDS-maintainable but require unbounded memory for Turing-decidability.
\end{problem}

\begin{problem}
Formalize ``logical entropy'' and determine whether a rigorous Gödel-Landauer correspondence exists.
\end{problem}

\section{Conclusion}

We have introduced BEDS, a formal framework for continuous inference under energy constraints. Our main contributions:

\begin{enumerate}[label=(\arabic*)]
    \item \textbf{Formal definitions}: BEDS systems, fluxes, and three problem classes (attainable, maintainable, crystallizable).
    
    \item \textbf{Energy-Precision Theorem}: Maintaining precision $\tau^*$ against dissipation $\gamma$ requires power $P \geq \gamma \kB T / 2$, with scaling $P \propto \gamma \tau^*$.
    
    \item \textbf{Paradigm comparison}: BEDS and Turing machines address different problem types. Their fundamental limits (energy bounds vs.\ undecidability) are incommensurable.
    
    \item \textbf{GLP Conjecture}: Closure pathologies across formal systems, computation, and thermodynamics may share common structure; openness and dissipation provide resolution.
\end{enumerate}

The framework opens several research directions: extending the theorem to non-Gaussian beliefs, analyzing moving targets, characterizing the BEDS-maintainable problem class, and formalizing the GLP conjecture.

\bigskip

\begin{center}
\textit{``To maintain precision, systems must pay in power.\\
To persist indefinitely, they must dissipate continuously.\\
To avoid paradox, they must remain open.''}
\end{center}

\section*{Acknowledgments}

This work benefited from discussions with Claude (Anthropic). The author thanks colleagues at IGN-ENSG and LaSTIG for feedback.


\appendix
\section{Proof Details}

\subsection{Entropy of Gaussian Distribution}

For $q = \N(\mu, \sigma^2)$:
\begin{equation}
    H[q] = \frac{1}{2} \ln(2\pi e \sigma^2) = \frac{1}{2} \ln(2\pi e) - \frac{1}{2} \ln \tau
\end{equation}

\subsection{Information Gain Derivation}

Prior: $q^- = \N(\mu^-, \sigma^{-2})$ with precision $\tau^-$.

Posterior after observation of precision $\tau_D$: $q^+ = \N(\mu^+, \sigma^{+2})$ with $\tau^+ = \tau^- + \tau_D$.

Entropy reduction:
\begin{align}
    \Delta H &= H[q^-] - H[q^+] \\
    &= \frac{1}{2} \ln(\sigma^{-2}) - \frac{1}{2} \ln(\sigma^{+2}) \\
    &= \frac{1}{2} \ln\frac{\tau^+}{\tau^-} \\
    &= \frac{1}{2} \ln\left(1 + \frac{\tau_D}{\tau^-}\right)
\end{align}

\subsection{Steady-State Power Derivation}

Rate equation:
\begin{equation}
    \frac{d\tau}{dt} = -\gamma \tau + \lambda \tau_D
\end{equation}

At steady state $\tau = \tau^*$:
\begin{equation}
    0 = -\gamma \tau^* + \lambda \tau_D \quad \Longrightarrow \quad \lambda = \frac{\gamma \tau^*}{\tau_D}
\end{equation}

Power:
\begin{equation}
    P = \lambda \cdot E_{\text{obs}} = \frac{\gamma \tau^*}{\tau_D} \cdot E_{\text{obs}}
\end{equation}

With Landauer minimum $E_{\text{obs}} \geq \frac{\kB T}{2} \ln\left(1 + \frac{\tau_D}{\tau^*}\right)$:
\begin{equation}
    P_{\min} = \frac{\gamma \tau^*}{\tau_D} \cdot \frac{\kB T}{2} \ln\left(1 + \frac{\tau_D}{\tau^*}\right)
\end{equation}

For $x = \tau_D/\tau^* \ll 1$: $\ln(1+x) \approx x$, so:
\begin{equation}
    P_{\min} \approx \frac{\gamma \tau^*}{\tau_D} \cdot \frac{\kB T}{2} \cdot \frac{\tau_D}{\tau^*} = \frac{\gamma \kB T}{2}
\end{equation}

\bigskip
\noindent\textit{Manuscript prepared January 2026. Released under CC-BY 4.0.}

\end{document}